\theoremstyle{plain}
\newtheorem{remark}{Remark}[section]
\newtheorem{proposition}{Proposition}[section]
\newcommand{\cu}[1]{
	\ifcat\noexpand#1\relax
	\bm{#1}
	\else
	\mathbf{#1}
	\fi
}
\newcommand{\diff}{\mathop{}\!\mathrm{d}}
\newcommand{\cond}{{\;|\;}}
\let\lim\relax
\DeclareMathOperator*{\lim}{lim\,}  
\let\grad\relax
\DeclareMathOperator{\grad}{\nabla\!}
\newcommand{\expecsym}{\operatorname{\mathbb{E}}}     
\newcommand{\covsym}{\operatorname{Cov}}     
\newcommand{\varrsym}{\operatorname{Var}}     
\newcommand{\diagsym}{\operatorname{diag}}     
\newcommand{\tracesym}{\operatorname{tr}}           
\let\expec\relax
\let\cov\relax
\let\varr\relax
\let\diag\relax
\let\trace\relax
\newcommand{\expec}{\@ifstar{\@expecauto}{\@expecnoauto}}
\newcommand{\@expecauto}[1]{\expecsym \left[ #1 \right]}
\newcommand{\@expecnoauto}[1]{\expecsym [#1]}
\newcommand{\cov}{\@ifstar{\@covauto}{\@covnoauto}}
\newcommand{\@covauto}[1]{\covsym \left[ #1 \right]}
\newcommand{\@covnoauto}[1]{\covsym [#1]}
\newcommand{\varr}{\@ifstar{\@varrauto}{\@varrnoauto}}
\newcommand{\@varrauto}[1]{\varrsym \left[ #1 \right]}
\newcommand{\@varrnoauto}[1]{\varrsym [#1]}
\newcommand{\diag}{\@ifstar{\@diagauto}{\@diagnoauto}}
\newcommand{\@diagauto}[1]{\diagsym \left( #1 \right)}
\newcommand{\@diagnoauto}[1]{\diagsym (#1)}
\newcommand{\diagbig}[1]{\diagsym \bigl( #1 \bigr)}
\newcommand{\trace}{\@ifstar{\@traceauto}{\@tracenoauto}}
\newcommand{\@traceauto}[1]{\tracesym \left( #1 \right)}
\newcommand{\@tracenoauto}[1]{\tracesym (#1)}
\newcommand*{\trans}{{\mkern-1.5mu\mathsf{T}}}
\newcommand*{\R}{\mathbb{R}} 
	\newtheorem{envcounter}{EnvcounterDummy}[\thmenvcounter]
	\newtheorem{proposition}[envcounter]{Proposition}
	\newtheorem{remark}[envcounter]{Remark}
\newcommand{\refmeasure}{\pi_{\mathrm{ref}}}
\begin{document}

\begin{frontmatter}
\title[Generative diffusion posterior sampling]{Generative diffusion posterior sampling for informative likelihoods\protect\thanksref{T1}}
\thankstext{T1}{This work was partially supported by 1) the Wallenberg AI, Autonomous Systems and Software Program (WASP) funded by the Knut and Alice Wallenberg Foundation, and 2) the Stiftelsen G.S Magnusons fond (MG2024-0035). }

\begin{aug}
    \author{\fnms{Zheng} \snm{Zhao}\ead[label=e1]{zheng.zhao@liu.se}}
    \address{Division of Statistics and Machine Learning\\
    		Link\"{o}ping University, Sweden\\
             \printead{e1}}
\end{aug}
\received{\sday{3} \smonth{1} \syear{2022}}

\begin{abstract}
Sequential Monte Carlo (SMC) methods have recently shown successful results for conditional sampling of generative diffusion models. 
In this paper we propose a new diffusion posterior SMC sampler achieving improved statistical efficiencies, particularly under outlier conditions or highly informative likelihoods. 
The key idea is to construct an observation path that correlates with the diffusion model and to design the sampler to leverage this correlation for more efficient sampling. 
Empirical results conclude the efficiency.
\end{abstract}

\begin{keyword}[class=AMS]
\kwd[Primary ]{62-08}
\kwd{62L99}
\kwd[; secondary ]{68T99}
\end{keyword}

\begin{keyword}
\kwd{Generative diffusion models, Feynman--Kac models, conditional sampling, sequential Monte Carlo}
\end{keyword}

\end{frontmatter}

\section{Problem formulation}
\label{sec:intro}
Consider a probability distribution $\pi(\cdot)$ extended as a time marginal of
\begin{equation}
	\begin{split}
		q_{0:N}(u_{0:N}) &= q_0(u_0)\prod_{k=1}^N q_{k\cond k-1}(u_k \cond u_{k-1}), \quad q_0 = \refmeasure, \quad q_N = \pi, \\
		p_{0:N}(x_{0:N}) &= p_0(x_0)\prod_{k=1}^N p_{k\cond k-1}(x_k \cond x_{k-1}), \quad p_0 = \pi, \quad p_N = \refmeasure, 
	\end{split}
	\label{equ:generative-model}
\end{equation}
where $p_{k\cond k-1}$ and $q_{k \cond k-1}$ stand for the noising and denoising transition distributions on $\R^d$, respectively. 
This pair of processes, when exists, constitutes a generative diffusion model, allowing efficient sampling from $\pi$ by first sampling from a reference distribution $\refmeasure$ and then sequentially denoising via $q_{k \cond k-1}$ for $k=1,2,\ldots, N$. 
The construction of such models has been a central topic in recent research, focusing on accelerating the computation and reducing the statistical approximation errors.
A common approach is to choose the noising $p_{k\cond k-1}$ as an Ornstein--Uhlenbeck process, so that approximately $p_N \approx \mathrm{N}(0, I)$ for large enough $N$, and that the denoising process $q_{k \cond k-1}$ can be estimated with score matching~\citep{Song2021scorebased, Ho2020DDPM}, or more recently, scoring rules~\citep{Shen2025reverse}. 
Another useful construction is the dynamic Schrödinger bridge, which constructs both noising and denoising processes by minimising a transportation cost relative to a reference process~\citep{DeBortoli2021diffusion}. 
For a detailed exposition of generative diffusion models, we refer the readers to, for example, \citet{Zhao2024rsta, Albergo2023stochastic, Benton2024}. 
Throughout this paper, we assume that such a generative diffusion model in Equation~\eqref{equ:generative-model} is given. 

The overall goal of the paper is to sample from the conditional/posterior distribution $\pi(\cdot \cond y) \propto f(y \cond \cdot) \, \pi(\cdot)$, where $f$ is a given likelihood function, and $y\in\R^c$ denotes an observation~\citep{Zhao2024rsta}. 
This task is at the core interests of Bayesian statistics in general, and in the context of generative diffusion models specifically, this has enabled many downstream applications, such as image restoration problems~\citep{Luo2024rsta}.
Recently, a few methods have been proposed to address this problem, notably the diffusion posterior sampling~\citep{Chung2023DPS} and their variants~\citep[see survey by][]{Daras2024survey}. 
These methods rely on constructing a conditional diffusion process characterised by transition $\widetilde{q}_{k\cond k-1}(u_k \cond u_{k-1}, y)$ that encodes the likelihood $f$ and observation, to target $\pi(\cdot \cond y)$. 
However, these methods are intrinsically biased due to necessary approximations to $\widetilde{q}_{k\cond k-1}$. 
This has then motivated another class of methods based on Feynman--Kac models and sequential Monte Carlo~\citep[SMC,][]{ChopinBook2020}, to remove such biases~\citep[e.g.,][]{Wu2023practical, Cardoso2024monte, Janati2024DC, Kelvinius2025DDSMC, Corenflos2024FBS}. 
At the heart, they work by using $\widetilde{q}$ as a proposal, and then correct the proposal samples with importance weights. 
While this type of methods has shown to be empirically working well, their statistical efficiencies (e.g., effective sample size) can be significantly degraded by high dimension $d$ and informative likelihood. 
This challenge has been recognised in the SMC literature, such as the seminal work by~\citet{DelMoral2015}.
However, in the context of SMC conditional sampling of generative diffusion models, this is under investigated. 
As such, the specific purpose of this paper is to address the problem when the likelihood being highly informative or when the observation is an outlier. 

Our contributions are as follows. 
We develop a new Feynman--Kac and SMC-based framework to sample conditional/posterior distributions with generative diffusion model prior. 
The method is designed to stay statistically robust, even when the observation is an outlier. 
We show via high-dimensional experiments that the proposed method outperforms the state of the art. 

The paper is organised as follows. 
In Section~\ref{sec:fk} we introduce necessary premises and explain how the conditional sampling works with a Feynman--Kac model and SMC sampler. 
Then, in Section~\ref{sec:corr-fk} we show our construction of the Feynman--Kac model to facilitate sampling under outlier observations, followed by experiments in Section~\ref{sec:experiments}. 

\section{Generative Feynman--Kac models}
\label{sec:fk}
A Feynman--Kac model $Q_{0:N}$ defines a probability distribution over a temporal sequence of $N+1$ random variables:
\begin{equation}
	Q_{0:N}(u_{0:N}) = \frac{1}{Z_N} \, M_0(u_0) \, G_0(u_0) \prod_{k=1}^N M_{k \cond k-1}(u_k \cond u_{k-1}) \, G_k(u_k, u_{k-1}), 
	\label{equ:fk}
\end{equation}
where $M_0$ is the initial distribution, $Z_N$ is the normalising constant, and $M_{k \cond k-1}$ and $G_k$ are Markov (proposal) transition distribution and potential function, respectively. 
Given the model components, a natural approach for sampling $Q_{0:N}$ is sequential Monte Carlo (SMC) which exploits the model's temporal structure, see Algorithm~\ref{alg:smc} for implementation. 
At its heart, this can be viewed as a tempering sequence, gradually evolving from $Q_0$ to $Q_N$. 
For detailed expositions of Feynman--Kac models and SMC samplers, we refer the readers to \citet{ChopinBook2020} and~\citet{DelMoral2004}.

\begin{algorithm2e}[h]
	\SetAlgoLined
	\DontPrintSemicolon
	\KwInputs{Model components $M_0$, $G_0$, $\lbrace M_{k \cond k-1}\rbrace_{k=1}^N$, $\lbrace G_k \rbrace_{k=1}^N$, number of samples $J$.}
	\KwOutputs{Weighted samples of $Q_{0:N}$. }
	Draw $J$ samples $\lbrace U_0^j \rbrace_{j=1}^J \sim M_0$.\;
	Weight $w_0^j = G_0(U_0^j) \, / \, \sum_{i=1}^J G_0(U_0^i)$ \tcp*{For $j=1,\ldots, J$}
	\For{$k=1,2,\ldots, N$}{%
		Resample $\bigl\lbrace (w_{k-1}, U_{k-1}^j) \bigr\rbrace_{j=1}^J$ if needed. \;
		Draw $U_k^j \sim M_{k \cond k-1}(\cdot \cond U_{k-1}^j)$. \tcp*{For $j=1,\ldots, J$}
		Weight $\overline{w}_k^j = w_{k-1}^j \, G_k(U_k^j, U_{k-1}^j)$. \tcp*{For $j=1,\ldots, J$}
		Normalise $w_k^j = \overline{w}_k^j \, / \, \sum_{i=1}^J \overline{w}_k^i$. \tcp*{For $j=1,\ldots, J$}
	}
	\caption{Sequential Monte Carlo (SMC) for sampling Feynman--Kac model $Q_{0:N}$. }
	\label{alg:smc}
\end{algorithm2e}

The Feynman--Kac model $Q_{0:N}$ is particularly useful for generative diffusion models. 
It not only generalises both conditional and unconditional diffusion models but also allows for efficient sampling via the SMC sampler in Algorithm~\ref{alg:smc}. 
As an example, if we set $M_0 = q_0$, $G_0 \equiv 1$, $M_{k \cond k-1} = q_{k \cond k-1}$, and $G_k \equiv 1$, then $Q_{0:N}$ exactly recovers the unconditional denoising path distribution $q_{0:N}$. 
To generalise $Q_{0:N}$ for the conditional sampling, one can \emph{design} a sequence of twisting functions $\lbrace l_k^y \rbrace_{k=0}^N$ that incorporate the likelihood and observation, and construct $Q_{0:N}$ by~\citep[see,][]{Zhao2024rsta, Janati2024DC, Wu2023practical}:
\begin{equation}
	\begin{split}
		&M_0 = q_0, \quad G_0 = l_0^y, \quad l_N^y(\cdot) = f(y \cond \cdot), \\
		&M_{k \cond k-1}(u_k \cond u_{k-1}) \, G_k(u_k, u_{k-1}) = \frac{l_k^y(u_k) \, q_{k \cond k-1}(u_k \cond u_{k-1})}{l_{k-1}^y(u_{k-1})},
	\end{split}
	\label{equ:twisted}
\end{equation}
so that now marginally $Q_N(u_N) \propto l_N^y(u_N) \, q_N(u_N) = f(y \cond u_N) \, \pi(u_N)$ recovers the target posterior distribution. 
Clearly, there is a degree of freedom in choosing the twisting sequence $\lbrace l_k^y \rbrace_{k=0}^N$ under only the terminal constraint $l_N^y(\cdot) = f(y \cond \cdot)$. 
A trivial choice is to use $l^y_k\equiv1$ for $k=0, 1, \ldots, N - 1$, but then the resulting model collapses a single-step importance sampling at $N$, making Algorithm~\ref{alg:smc} useless for high-dimensional diffusion models. 
Therefore, the key challenge here is the design of the twisting functions to facilitate efficient sampling of the SMC sampler in Algorithm~\ref{alg:smc}. 
Ideally, we would like the tempering sequence
\begin{equation}
	Q_k(u_k) \propto l^y_k(u_k) \, q_k(u_k)
\end{equation}
to be ``equidistant'' enough over $k=0, 1, \ldots, N$.

\begin{remark}[Bootstrap and guided construction]
	\label{remark:bootstrap-guided}
	Running Algorithm~\ref{alg:smc} requires the Markov proposal $M_{k \cond k-1}$ and potential function $G_k$ which are implicitly defined through a product structure in Equation~\eqref{equ:twisted}. 
	In practice, the bootstrap
	\begin{equation}
		M_{k \cond k-1} = q_{k \cond k-1}(u_k \cond u_{k-1}), \quad G_k(u_k, u_{k-1}) = \frac{l_k^y(u_k)}{l_{k-1}^y(u_{k-1})}, 
		\label{equ:bootstrap}
	\end{equation}
	and the guided 
	\begin{equation}
		\begin{split}
			M_{k \cond k-1} &\propto l^y_k(u_k) \, q_{k \cond k-1}(u_k \cond u_{k-1}),\\ 
			G_k(u_k, u_{k-1}) &= \frac{l_k^y(u_k) \, q_{k \cond k-1}(u_k \cond u_{k-1})}{l_{k-1}^y(u_{k-1}) \, M_{k \cond k-1}(u_k \cond u_{k-1})}, 
		\end{split}
		\label{equ:guided}
	\end{equation}
	are the most common two constructions of the model~\citep{ChopinBook2020}. 
	While the bootstrap version is easier to implement and is often computationally cheaper, the guided version is statistically superior, particularly for high-dimensional problems. 
	The guided proposal $M_{k\cond k-1}$ is locally optimal, minimising the marginal variance of the weights in Algorithm~\ref{alg:smc}.
\end{remark}

A canonical design, perhaps the most used, of the twisting function~\citep{Wu2023practical} is given by
\begin{equation}
	l^y_k(u_k) = \int f(y \cond u_N) \, q_{N \cond k}(u_N \cond u_k) \diff u_N, 
	\label{equ:twisting-canonical}
\end{equation}
such that the marginal distribution $Q_k(u_k) = \pi(u_k \cond y)$. 
Notably, The resulting Feynman--Kac model gives an Eulerian representation of the conditional score stochastic differential equation~\citep[SDE,][Eq. 14]{Song2021scorebased} at discrete times when the denoising model $q_{k \cond k-1}$ is given by a score SDE. 
Precisely, if the noising process is an SDE $\diff X_t = a(X_t, t) \diff t + \diff B_t$ with $X_0 \sim \pi$, then $Q_{0:N}$ is the finite-dimensional distribution of the conditional SDE
\begin{equation}
	\begin{split}
		\diff U_t &= -a(U_t, T - t) + \grad \log p_{T - t}(U_t) + \grad \log l^y_{T - t}(U_t)  \diff t + \diff W_t, \\
		U_t &\sim \refmeasure(\cdot \cond y),
	\end{split}
	\label{equ:cond-sde}
\end{equation}
at discrete times $t_0, t_1, \ldots, t_N$. 
The score $\grad \log l^y_k(\cdot)$, sometimes referred to as a guidance or control term, is added alongside with the unconditional score function $\grad \log p_k(\cdot)$ in the conditional SDE. 
Although the exact twisting function in Equation~\eqref{equ:twisting-canonical} is intractable in practice, many established approximations have been proposed~\citep[see surveys by][]{Daras2024survey, Luo2024rsta}.
Furthermore, these approximations can be directly integrated in the Feynman--Kac model, for improved statistical performance~\citep[e.g.,][]{Kelvinius2025DDSMC} with SMC samplers. 
These are the reasons why this canonical twisting construction is popularly used. 
As an example,~\citet{Wu2023practical} combine Tweedie's formula and first-order approximation to the integral: $l^y_k(u_k) \approx f(y \cond \expec{X_0 \cond X_{N-k} = u_k})$ proposed by~\citet{Chung2023DPS}.

However, many criticism can be said to this canonical twisting. 
First, in almost all practical cases, the canonical twisting has to be approximated, and the approximation errors are substantial and difficult to control particularly when $N - k$ is large. 
Second, even if we were able to compute the canonical twisting exactly, constructing an effective proposal $M_{k \cond k-1}$, see Remark~\ref{remark:bootstrap-guided}, is hard as well. 
These two problems are especially pronounced when the likelihood $f$ is informative, or $y$ is an outlier observation. 
To build some intuition, consider a heuristic: suppose that we have $J$ samples $\lbrace U^j_0 \rbrace_{j=1}^J \sim q_0$, then, to get weighed samples for $Q_0(\cdot) \propto l^y_0(\cdot) \, q_0(\cdot)$, Equation~\eqref{equ:twisting-canonical} is essentially pushforwarding $\lbrace U^j_0 \rbrace_{j=1}^J$ to $\lbrace U^j_N \rbrace_{j=1}^J$ and then obtain the weights by evaluating $f(y \cond \lbrace U^j_N \rbrace_{j=1}^J)$ which are likely to be degenerate under outlier $y$. 

\section{Construction of twisting sequence with observation path}
\label{sec:corr-fk}
After all, if our primary interest is only the marginal $q_N(\cdot \cond y)$, we are not obliged to constrain $Q_{0:N}$ to follow the SDE representation in Equation~\eqref{equ:cond-sde}.
In this section, we depart from the canonical design, and develop a new construction of the twisting functions $\lbrace l^{v_k}_k \rbrace_{k=0}^N$, especially tailored for informative likelihood or outlier observation. 
Our gist consists in designing $l^{v_k}_k$ in a way that it is an ``appropriate'' likelihood for $q_k$ at $k$, by making a smooth bridging/tempering between $l_0^{v_0}$, the starting likelihood, and $l_N^{v_N}$, the target likelihood. 
To clarify, for instance, if $q_0$ is a standard Gaussian, then we ideally would like $l_0^{v_0} \approx \mathrm{N}(0; 0, I)$ facilitating importance sampling at this step. 
We achieve this by drawing inspiration from~\citet{Corenflos2024FBS}, \citet{Dou2024dps}, and~\citet{Trippe2023diffusion} to construct an auxiliary path of observations correlated with the diffusion model. 
The intermediate likelihood will then reflect this correlation at all steps. 

We define a new process $\lbrace Y_k \rbrace_{k=0}^N$ initialised with the given observation $Y_0 = y \equiv y_0$, and simulate the path $\lbrace Y_k \rbrace_{k=0}^N$ by a noising transition $p_{k \cond k-1}(y_k \cond y_{k-1})$.\footnote{For simplicity, we let the noising process for $y$ be the same as in Equation~\eqref{equ:generative-model} for $x$, but they do not have to be the same in practice.} 
Write its time-reversal as $\lbrace V_k = Y_{N - k} \rbrace_{k=0}^N$. 
We design the twisting function as a time-pushforward of the target likelihood:
\begin{equation}
	l_{N - k}^{v_{N - k}}(x_k) = p_{k}(y_{k} \cond x_k)
	\label{equ:aux-twisted}
\end{equation}
which stands for an intermediate likelihood function in $x_k$ under the observation $Y_k = y_k$. 
Clearly, the terminal constraint $l_N^{v_N}(\cdot) = p_0(y \cond \cdot) = f(y \cond \cdot)$ is satisfied. 
Moreover, it is more suitable compared to the canonical construction in Equation~\eqref{equ:twisting-canonical} for weighting samples at step $k$, as it smoothly interpolates from $p_N(y_N \cond \cdot)$ to $f(y \cond \cdot)$.
This facilitates better importance weighting at intermediate steps.
The remaining blocker is how to compute the likelihood $p_k(y_k \cond x_k)$ which, like the canonical twisting, is generally intractable too.
However, it turns out that if the target likelihood $f$ is linear Gaussian, then we can easily approximate it with a recursion kernel as we describe next. 

\subsection{Construction with Gaussian likelihood models}
\label{sec:gaussian}
Recall from Equation~\eqref{equ:aux-twisted} that our goal is to efficiently approximate the twisting $p_{k}(y_{k} \cond x_k)$ given by
\begin{equation}
	p_k(y_k \cond x_k) = \int p_k(y_k \cond y_0) \, f(y_0 \cond x_0) \, p_0(x_0 \cond x_k) \diff y_0 \diff x_0,
\end{equation}
which is in general intractable due to the denoising $p_0(x_0 \cond x_k)$. 
It seems that the same problem of the canonical twisting in Equation~\eqref{equ:twisting-canonical} persists, as existing approximations to $p_0(x_0 \cond x_k)$ will result in large errors due to the time leap from $k$ to $0$. 
However, it turns out that we can obtain $p_k(y_k \cond x_k)$ based on $p_{k-1}(y_{k-1} \cond x_{k-1})$ for which we can then recursively compute $p_k(y_k \cond x_k)$ starting at the target likelihood $p_0(y_0 \cond x_0) = f(y_0 \cond x_0)$. 
This is shown in the following proposition.

\begin{proposition}
	For any $k=0,1,\ldots$, the intermediate likelihood
	\begin{equation}
		p_k(y_k \cond x_k) = \mathcal{K}^k(f)(y_k, x_k)
		\label{equ:intermediate-likelihood}
	\end{equation}
	is given by $k$-times applications of an operator $\mathcal{K} \colon (\R^c\times\R^d\to\R_+) \to (\R^c\times\R^d\to\R_+)$ on $f$, defined by
	\begin{equation*}
		\mathcal{K}(\psi)(y_k, x_k) \coloneqq \int p_k(y_k \cond y_{k-1}) \, \psi(y_{k-1}, x_{k-1}) \, p_{k-1}(x_{k-1} \cond x_k) \diff y_{k-1} \diff x_{k-1}.
	\end{equation*}
\end{proposition}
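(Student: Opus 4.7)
The plan is to prove the identity by induction on $k$. The base case $k = 0$ is immediate: $\mathcal{K}^0(f)(y_0, x_0) = f(y_0, x_0) = f(y_0 \cond x_0) = p_0(y_0 \cond x_0)$, matching the joint model's initial condition $Y_0 \cond X_0 \sim f(\cdot \cond X_0)$.

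For the inductive step, I would first write out the joint factorisation of $(X_{0:k}, Y_{0:k})$, namely
\begin{equation*}
\pi(x_0) \, f(y_0 \cond x_0) \prod_{j=1}^k p_{j \cond j-1}(x_j \cond x_{j-1}) \, p_{j \cond j-1}(y_j \cond y_{j-1}),
\end{equation*}
and then expand
\begin{equation*}
p_k(y_k \cond x_k) = \int p(y_k, y_{k-1}, x_{k-1} \cond x_k) \diff y_{k-1} \diff x_{k-1}
\end{equation*}
by the chain rule. Three conditional factors appear: (i) $p(y_k \cond y_{k-1}, x_{k-1}, x_k) = p_{k \cond k-1}(y_k \cond y_{k-1})$, since $Y_k$ is generated only from $Y_{k-1}$; (ii) $p(y_{k-1} \cond x_{k-1}, x_k) = p_{k-1}(y_{k-1} \cond x_{k-1})$, which requires the conditional independence of $X_k$ and $Y_{k-1}$ given $X_{k-1}$; and (iii) $p(x_{k-1} \cond x_k) = p_{k-1}(x_{k-1} \cond x_k)$, the Bayes-reversal of the noising chain that appears in the definition of $\mathcal{K}$. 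Substituting the induction hypothesis $p_{k-1}(y_{k-1} \cond x_{k-1}) = \mathcal{K}^{k-1}(f)(y_{k-1}, x_{k-1})$ into factor (ii) and recognising the resulting integral as $\mathcal{K}\bigl(\mathcal{K}^{k-1}(f)\bigr)(y_k, x_k) = \mathcal{K}^k(f)(y_k, x_k)$ closes the induction.

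The only delicate point is (ii), i.e.\ verifying that $X_k$ and $Y_{k-1}$ are conditionally independent given $X_{k-1}$. This holds because $X_k$ is generated from $X_{k-1}$ via a noising kernel whose driving noise is independent of the $Y$-chain; equivalently, on the DAG of the joint model every path from $X_k$ to $Y_{k-1}$ must pass through $X_{k-1}$, so d-separation delivers the independence. Once this is in place, the remaining steps are routine applications of Chapman--Kolmogorov and Bayes' rule.
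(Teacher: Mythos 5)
Your proof is correct and follows essentially the same route as the paper: marginalise $p_k(y_k \cond x_k)$ over $(y_{k-1}, x_{k-1})$, factor the integrand into $p_k(y_k \cond y_{k-1})\, p_{k-1}(y_{k-1} \cond x_{k-1})\, p_{k-1}(x_{k-1} \cond x_k)$ via the Markov structure, and iterate. Your explicit justification of the conditional independence of $Y_{k-1}$ and $X_k$ given $X_{k-1}$ is a welcome addition that the paper's terser argument leaves implicit.
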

\begin{proof}
	We can obtain $p_k(y_k \cond x_k) = \int p_k(y_k, y_{k-1}, x_{k-1} \cond x_k) \diff y_{k-1} \diff x_{k-1} = \int p_k(y_k \cond y_{k-1}, x_{k-1}, x_k) \, p_{k-1}(y_{k-1}, x_{k-1} \cond x_k) \diff y_{k-1} \diff x_{k-1} = \int p_k(y_k \cond y_{k-1}) \times\\ p_{k-1}(y_{k-1} \cond x_{k-1}) \, p_{k-1}(x_{k-1} \cond x_k) \diff y_{k-1} \diff x_{k-1}$ if we know $p_{k-1}(y_{k-1} \cond x_{k-1})$. 
	The result is concluded by iteratively applying the recursion.
\end{proof}
\begin{remark}
	If the observation noising process $p_{k-1}(y_k \cond y_{k-1})$ is $\phi$-stationary: $\int p_{k-1}(y_k \cond y_{k-1}) \diff \phi(y_{k-1}) = \phi(y_k)$, then $\mathcal{K}(\phi)(y, \cdot) = \phi(y)$ is a fixed point.
	The resulting twisting function will be approximately a standard Normal if the noising is an OU process.
\end{remark}

Recall that the canonical twisting in Equation~\eqref{equ:twisting-canonical} requires global approximations to $p_0(x_0 \cond x_k)$ between $0$ and $k$, which is hard. 
In contrast, the interpolating twisting in Equation~\eqref{equ:intermediate-likelihood} only needs local approximations between continuum $k-1$ and $k$, significantly reducing complexity and improving numerical robustness in practice.
We show a particular case when the target likelihood is linear Gaussian for how to approximate $\mathcal{K}^k(f)$ in closed form.

Let the denoising process be defined by
\begin{equation}
	p_{k-1}(x_{k-1} \cond x_k) = \mathrm{N}(x_{k-1}; r_k(x_k), C_k),
\end{equation}
where $r_k\colon \R^d\to\R^d$ is a non-linear denoising function~\citep[e.g., the sampling function that contains the noise predictor in DDPM,][]{Ho2020DDPM}, and $C_k\in\R^{d\times d}$ is a covariance matrix. 
With a slight abuse of notation, this is equivalent to $q_{N - k + 1 \cond N - k}(u_{N - k + 1} \cond u_{N - k})$ in earlier sections but differs only in terms of whether the time flows forward or backward.
Assume that the target likelihood $f(y \cond x) = \mathrm{N}(y; H \, x + b, R)$ is a Gaussian with operator $H\colon \R^d \to \R^c$, bias $b\in\R^c$, and covariance $R\in\R^{c\times c}$. 
Further, choose the auxiliary process $p_{k \cond k-1}(y_k \cond y_{k-1}) = \mathrm{N}(y_k ; A_{k-1} \, y_{k-1}, \Sigma_{k-1})$ for some matrix $A_{k-1} \in\R^{c\times c}$ and covariance $\Sigma_{k-1}\in\R^{c\times c}$.\footnote{This needs not to be the same as the noising process in Equation~\eqref{equ:generative-model}.} 
The following result shows that we can approximate $p_k(y_k \cond x_k)$ by Gaussian with closed-form mean and covariance.

\begin{proposition}[Sequential zeroth-order twisting approximation]
	\label{prop:zero-order-approx}
	At any $k=0,1,\ldots$, define
	\begin{equation*}
		\begin{split}
			\widetilde{\mathcal{K}}(\psi)(y_k, x_k) &\coloneqq \int p_k(y_k \cond y_{k-1}) \, \psi(y_{k-1}, x_{k-1}) \, \mathrm{N}(x_{k-1} ; x_k, C_k) \diff y_{k-1} \diff x_{k-1}, \\
			&\approx \mathcal{K}(\psi)(y_k, x_k).
		\end{split}
	\end{equation*}
	Then $\widetilde{p}_k(y_k \cond x_k) \coloneqq \widetilde{\mathcal{K}}^k(f)(y_k, x_k) = \mathrm{N}(y_k; F_k \, x_k + z_k, \Omega_k)$, where
	\begin{equation*}
		\begin{split}
			F_k &= S_0^k \, H, \\
			z_k &= S_0^k \, b, \\
			\Omega_k &= 
			\begin{cases}
				R, & k=0, \\
				S_0^1 \, (H \, C_1 \, H^\trans + R) \, (S_0^1)^\trans + \Sigma_0, & k=1, \\
				S_0^k \Bigl( H \Bigl(\sum\limits_{i=1}^k C_i\Bigr) H^\trans + R\Bigr) (S_0^k)^\trans + \sum\limits_{i=0}^{k-2} S_{i+1}^k \Sigma_i (S_{i+1}^k)^\trans + \Sigma_{k-1}, & k>1,
			\end{cases}
		\end{split}
	\end{equation*}
	with semigroup $S_m^n \coloneqq \prod_{i=m}^{n-1} A_i$.
\end{proposition}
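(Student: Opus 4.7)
The plan is to prove the result by induction on $k$, exploiting the fact that every factor inside $\widetilde{\mathcal{K}}(\psi)$ is linear Gaussian in its arguments, so Gaussianity is preserved under $\widetilde{\mathcal{K}}$ and all parameters can be tracked via standard affine–Gaussian integration rules. The base case $k=0$ is immediate: $\widetilde{\mathcal{K}}^0(f) = f = \mathrm{N}(\,\cdot\,; Hx + b, R)$, and the empty-product convention $S_0^0 = I$ yields $F_0 = H$, $z_0 = b$, $\Omega_0 = R$, matching the stated formula.

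For the inductive step, I assume $\widetilde{p}_{k-1}(y_{k-1} \cond x_{k-1}) = \mathrm{N}(y_{k-1}; F_{k-1} x_{k-1} + z_{k-1}, \Omega_{k-1})$ and evaluate $\widetilde{p}_k = \widetilde{\mathcal{K}}(\widetilde{p}_{k-1})$ as a double Gaussian integral in two successive marginalisations. First, I integrate out $x_{k-1}$ against $\mathrm{N}(x_{k-1}; x_k, C_k)$ (this is exactly where the zeroth-order freeze $r_k(x_k) \mapsto x_k$ enters); since $y_{k-1}$ is affine in $x_{k-1}$, the affine–Gaussian rule gives $y_{k-1} \cond x_k \sim \mathrm{N}(F_{k-1} x_k + z_{k-1}, F_{k-1} C_k F_{k-1}^\trans + \Omega_{k-1})$. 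Second, I integrate out $y_{k-1}$ against $y_k \cond y_{k-1} \sim \mathrm{N}(A_{k-1} y_{k-1}, \Sigma_{k-1})$, obtaining
$$\widetilde{p}_k(y_k \cond x_k) = \mathrm{N}\bigl(y_k;\, A_{k-1} F_{k-1} x_k + A_{k-1} z_{k-1},\, A_{k-1}(F_{k-1} C_k F_{k-1}^\trans + \Omega_{k-1}) A_{k-1}^\trans + \Sigma_{k-1}\bigr).$$

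From this identification I read off the one-step recursions $F_k = A_{k-1} F_{k-1}$, $z_k = A_{k-1} z_{k-1}$, and $\Omega_k = A_{k-1} F_{k-1} C_k F_{k-1}^\trans A_{k-1}^\trans + A_{k-1} \Omega_{k-1} A_{k-1}^\trans + \Sigma_{k-1}$. The first two telescope directly into $F_k = S_0^k H$ and $z_k = S_0^k b$ by definition of the semigroup. The covariance claim is the only step requiring real bookkeeping: substituting the inductive form of $\Omega_{k-1}$, the new $H C_k H^\trans$ contribution merges into the running sum $H\bigl(\sum_{i=1}^{k}C_i\bigr)H^\trans$, the left-multiplication by $A_{k-1}$ promotes every $S_{i+1}^{k-1}$ into $S_{i+1}^k$ and $S_0^{k-1}$ into $S_0^k$, the old dangling $\Sigma_{k-2}$ term becomes $S_{k-1}^k \Sigma_{k-2} (S_{k-1}^k)^\trans$ (the new entry in the sum), and the added $\Sigma_{k-1}$ plays the role of the new dangling term. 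The case split $k=1$ versus $k>1$ appears only because the sum $\sum_{i=0}^{k-2}$ is empty at $k=1$; it does not reflect any analytic difference. The main obstacle throughout is thus not any hard computation but the careful accounting of semigroup index shifts, which induction makes routine.
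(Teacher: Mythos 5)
Your proof is correct and follows essentially the same route as the paper's: an induction establishing the one-step recursion $F_k = A_{k-1}F_{k-1}$, $z_k = A_{k-1}z_{k-1}$, $\Omega_k = A_{k-1}(F_{k-1}C_kF_{k-1}^\trans + \Omega_{k-1})A_{k-1}^\trans + \Sigma_{k-1}$ via two affine--Gaussian marginalisations, then unrolling from $F_0=H$, $z_0=b$, $\Omega_0=R$. You merely spell out the telescoping to the closed-form $\Omega_k$ in more detail than the paper, which simply asserts that applying the recursion concludes the result.
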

\begin{proof}
	Suppose that $\widetilde{p}_k(y_k \cond x_k) = \mathrm{N}(y_k; F_k \, x_k + z_k, \Omega_k$, then $\widetilde{p}_{k+1}(y_{k+1} \cond x_{k+1}) = \mathrm{N}(y_{k+1}; F_{k+1} \, x_{k+1} + z_{k+1}, \Omega_{k+1})$ with
	\begin{equation*}
		\begin{split}
			F_{k+1} &= A_k \, F_k, \quad z_{k+1} = A_k \, z_k, \\
			\Omega_{k+1} &= A_k \, (F_k \, C_{k+1} \, F_k^\trans + \Omega_k) \, A_k^\trans + \Sigma_k.
		\end{split}
	\end{equation*}
	Applying the equation above recursively starting at $F_0 = H$, $z_0 = b$, and $\Omega_0 = R$ concludes the result.
\end{proof}

\begin{remark}
	Suppose that $C_k$, $A_k$, and $\Sigma_k$ are constants, and $(A, \Sigma)$ leaves $\mathrm{N}(0, I)$ invariant. 
	Then $\lim_{k\to\infty} \widetilde{p}_k(y_k \cond x_k) = \mathrm{N}(0, I)$ as well, suitable for importance sampling with $\refmeasure \approx \mathrm{N}(0, I)$. 
\end{remark}

From now on, we will use the proposed twisting function
\begin{equation}
	l_k^{v_k}(\cdot) = \widetilde{\mathcal{K}}^{N - k}(f)(y_{N - k}, \cdot)
\end{equation}
in the reverse-time notation. 
Although the twisting is approximate, the marginal $q_N(\cdot \cond y)$ that we are primarily concerned is left un-approximated. 
Furthermore, this twisting function gives us a tractable guided proposal $M_{k \cond k-1}$ mentioned in Remark~\ref{remark:bootstrap-guided} allowing for efficiently sampling:
\begin{equation}
	\begin{split}
		M_{k \cond k-1}(u_k \cond u_{k-1}) &\propto l_k^{v_k}(u_k) \, q_{k \cond k-1}(u_k \cond u_{k-1}) \\
		&= \mathrm{N}\bigl(u_k; \mu(u_{k-1}), C_{N - k} - D_k \, F_{N - k} \, C_{N-k}\bigr), \\
		\mu(u_{k-1}) &=r_{N-k}(u_{k-1}) \! + \! D_k (v_k - F_{N - k} \, r_{N - k}(u_{k-1}) - z_{N - k}), \\
		D_k &= C_{N - k} \, F_{N - k}^\trans \, \bigl( F_{N - k} \, C_{N-k} \, F_{N-k}^\trans + \Omega_{N - k}\bigr)^{-1},
	\end{split}
	\label{equ:cosmc-proposal}
\end{equation}
and as a result the potential function $G_k$ adapts to
\begin{equation}
	G_k(u_k, u_{k-1}) = \frac{l_k^{v_k}(u_k) \, q_{k \cond k-1}(u_k \cond u_{k-1})}{l_{k-1}^{v_{k-1}}(u_{k-1}) \, M_{k \cond k-1}(u_k \cond u_{k-1})}. 
	\label{equ:cosmc-potential}
\end{equation}
We remark that the matrix inversion in Equation~\eqref{equ:cosmc-proposal} is not a computational hurdle for most applications using generative diffusion models: the inversion can be done offline prior to running the SMC sampler, and the dimension does not depend on $d$. 
Even if we want to compute Equations~\eqref{equ:cosmc-proposal} and~\eqref{equ:cosmc-potential} online in $k$, the matrix inversions can be fast solved with one eigendecomposition, when the noising processes' coefficients are scalar. 

The final guided SMC sampler using the proposed twisting in Proposition~\ref{prop:zero-order-approx}, is summarised in Algorithm~\ref{alg:b0-smc}. 
We call it B$^0$SMC reflecting how we construct the twisting with a likelihood bridging, and how we compute it with a zero-th order approximation.

\begin{algorithm2e}[h]
	\SetAlgoLined
	\DontPrintSemicolon
	\KwInputs{Diffusion model for $\pi$, observation $y$, likelihood model $H$, $b$, and $R$, number of samples $J$, auxiliary transition matrices $\lbrace A_k \rbrace_{k=1}^N$ and covariances $\lbrace\Sigma_k\rbrace_{k=1}^N$}
	\KwOutputs{Weighted samples $\lbrace (w_N^j, U_N^j) \rbrace_{j=1}^J \sim \pi(\cdot \cond y)$.}
	Set $y_0 = y$ and sample $y_{1:N} = \lbrace y_1, y_2, \ldots, y_N \rbrace$ based on $p_{k \cond k-1}(y_k \cond y_{k-1})$\;
	Reverse $v_k = y_{N-k}$ for $k=0, 1, \ldots, N$\;
	Compute $\lbrace F_k, z_k, \Omega_k \rbrace_{k=1}^N$ in Proposition~\ref{prop:zero-order-approx}\;
	Using $M_{k\cond k-1}$ and $G_k$ in \eqref{equ:cosmc-proposal} and~\eqref{equ:cosmc-potential}\;
	Draw $J$ samples $\lbrace U_0^j \rbrace_{j=1}^J \sim M_0$\;
	Weight $w_0^j = G_0(U_0^j) \, / \, \sum_{i=1}^J G_0(U_0^i)$ \tcp*{For $j=1,\ldots, J$}
	\For{$k=1,2,\ldots, N$}{%
		Resample $\bigl\lbrace (w_{k-1}, U_{k-1}^j) \bigr\rbrace_{j=1}^J$ if needed \;
		Draw $U_k^j \sim M_{k \cond k-1}(\cdot \cond U_{k-1}^j)$ \tcp*{For $j=1,\ldots, J$}
		Weight $\overline{w}_k^j = w_{k-1}^j \, G_k(U_k^j, U_{k-1}^j)$ \tcp*{For $j=1,\ldots, J$}
		Normalise $w_k^j = \overline{w}_k^j \, / \, \sum_{i=1}^J \overline{w}_k^i$ \tcp*{For $j=1,\ldots, J$}
	}
	\caption{Guided B$^0$SMC for sampling the diffusion posterior $\pi(x \cond y) \propto \mathrm{N}(y; H\, x + b, R) \, \pi(x)$. }
	\label{alg:b0-smc}
\end{algorithm2e}

\section{Experiments}
\label{sec:experiments}
In this section we validate the proposed B$^0$SMC sampler in Algorithm~\ref{alg:b0-smc} on a high-dimensional conditional sampling problem ($d=256$). 
Importantly, we focus on gauging how statistically robust the method is against different levels of outlier observation and number of particles. 
The implementation is performed on an NVIDIA A100 GPU using JAX~\citep{Jax2018github}, and the code is published at \textcolor{magenta}{\url{https://github.com/zgbkdlm/gfk}} for reproducibility.

The target posterior distribution is defined by a Gaussian mixture posterior
\begin{equation}
	\begin{split}
		\pi(x \cond y) &\propto \mathrm{N}(y^\omega ; H \, x, R) \, \pi(x), \quad \pi(x) = \sum_{i=1}^{10} \gamma_i \, \mathrm{N}(x; m_i, \Lambda_i),\\
		y^\omega &= \mathbb{E}_{y\sim \pi(y)}[y] + \omega
	\end{split}
	\label{equ:gm}
\end{equation}
where the mixture weight $\gamma_i$, mean $m_i \in\R^{256}$, and covariance $\Lambda_i\in\R^{256 \times 256}$, and the observation components $H\in\R^{1\times 256}$ and $R$, are all randomly drawn according to a distribution (see Appendix~\ref{appendix:details} for details). 
The observation $y^\omega$ is controlled by an outlier level $\omega$ that moves it away from the most likely position. 
For this model, the posterior distribution is also a Gaussian mixture with tractable means and covariances. 
Importantly, we can also obtain a tractable diffusion model to precisely target this $\pi$, ablating common errors such as from training a neural network or from assuming large enough $N$. 
The noising and denoising models are given by
\begin{equation}
	\begin{split}
		\diff X_t &= -X_t \diff t + \sqrt{2}\diff W_t, \quad X_0 \sim \pi, \\
		\diff U_t &= U_t + 2\grad\log p_t(U_t, T - t) \diff t + \sqrt{2} \diff B_t, \quad U_0 \sim p_T, 
	\end{split}
	\label{equ:test-diffusion}
\end{equation}
where we select $T = 2$ and discretise the SDEs with $N=100$ steps $0=t_0<t_1<\cdots<t_N=T$. 
The noising marginal $p_T$ is also a Gaussian mixture, and initialising the denoising process with it gives $U_T \sim \pi$. 
The details are also shown in Appendix~\ref{appendix:details}.

We compare to one baseline and two state-of-the-art methods for the conditional diffusion sampling. 
The baseline method is diffusion posterior sampling (DPS) developed by~\citet{Chung2023DPS}, a common and practical sampler for diffusion inverse problems. 
The two state-of-the-art methods are TDS by~\cite{Wu2023practical} and MCGDiff by~\citet{Cardoso2024monte}, both are based on Algorithm~\ref{alg:smc} but with different twisting approximations for Equation~\eqref{equ:twisting-canonical}. 
Note that for MCGDiff we use its noiseless version for ablation comparison, therefore we set $R = 10^{-8}$ for our B$^0$SMC when comparing to MCGDiff.
The methods' performance is measured by sliced Wasserstein distance (SWD, with $1$-norm and 1,000 projections) and effective sample size (ESS). 
Unless otherwise mentioned, all experiments are independently repeated for 100 times and we report the statistics of the results. 

\begin{table}[t!]
	\caption{Comparison of sliced Wasserstein distance (SWD) and effective sample size (ESS) at different outlier levels, with the number of particles being 16,384. Here we report the mean, and standard deviation in parenthesis. 
	We see that B$^0$SMC is consistently the best for all outlier levels.}
	\label{tbl:vs-tds}
	\begin{tabular}{@{}lcccccc@{}}
		\toprule
		& \multicolumn{2}{c}{Outlier level $\omega = 0$} & \multicolumn{2}{c}{Outlier level $\omega = 5$} & \multicolumn{2}{c}{Outlier level $\omega = 10$} \\ \cmidrule(l){2-3} \cmidrule(l){4-5} \cmidrule(l){6-7}
		& SWD (\textdownarrow)                & ESS (\textuparrow)          & SWD                 & ESS           & SWD                  & ESS           \\ \midrule
		DPS  & 1.31 (0.89)         & N/A           & 3.04 (1.04)         & N/A           & 4.00 (\textbf{1.32})          & N/A           \\
		TDS & 0.12 (0.14)         & 14440         & 0.62 (1.02)         & 14142         & 0.87 (1.83)          & 13738         \\
		B$^0$SMC & \textbf{0.06} (\textbf{0.03})         & \textbf{15720}         & \textbf{0.25} (\textbf{0.89})         & \textbf{14984}         & \textbf{0.68} (1.83)          & \textbf{14626}         \\ \bottomrule
	\end{tabular}
\end{table}

\begin{table}[t!]
	\caption{Comparison between B$^0$SMC and MCGDiff with different number of particles. We see that B$^0$SMC is consistently the best for all settings in terms of mean and standard deviation (in parenthesis).}
	\label{tbl:vs-mcgdiff}
	\begin{tabular}{@{}lcccccc@{}}
		\toprule
		& \multicolumn{2}{c}{Nr. particles 1024} & \multicolumn{2}{c}{Nr. particles 4096} & \multicolumn{2}{c}{Nr. particles 16384} \\ \cmidrule(l){2-3} \cmidrule(l){4-5} \cmidrule(l){6-7}
		& SWD (\textdownarrow)                & ESS (\textuparrow)          & SWD                 & ESS           & SWD                  & ESS           \\ \midrule
		MCGDiff  & 0.55 (0.38)         & 884           & 0.38 (0.32)         & 3543           & 0.31 (0.40)          & 14201          \\
		B$^0$SMC & \textbf{0.44} (\textbf{0.35})         & \textbf{970}         & \textbf{0.22} (\textbf{0.16})         & \textbf{3883}         & \textbf{0.11} (\textbf{0.07})          & \textbf{15531}         \\ \bottomrule
	\end{tabular}
\end{table}

Table~\ref{tbl:vs-tds} shows the results compared to DPS and TDS. 
A first and crucial observation from the results is that the proposed method B$^0$SMC outperforms DPS and TDS by an order of magnitude.
This conclusion holds consistently over different outlier levels. 
Even when the outlier level $\omega=0$ is none, B$^0$SMC still works significantly better than TDS, possibly thanks to the tractable Markov proposal in Equation~\eqref{equ:cosmc-proposal}. 
The second but side observation, is that TDS works better than DPS. 
This aligns with the motivation of TDS which essentially used DPS as a proposal and then corrected it with an SMC sampler in Algorithm~\ref{alg:smc}. 

Comparison to MCGDiff is shown in Table~\ref{tbl:vs-mcgdiff}. 
We separate this from Table~\ref{tbl:vs-tds} because this is a noiseless observation scenario, and therefore the notion of outlier observation is less defined. 
In this table, we find that B$^0$SMC is significantly better than MCGDiff across different number of particles. 
The standard deviation of B$^0$SMC is also consistently better, implying a more stable sampling process.  

\begin{figure}[t!]
	\centering
	\includegraphics[width=\linewidth]{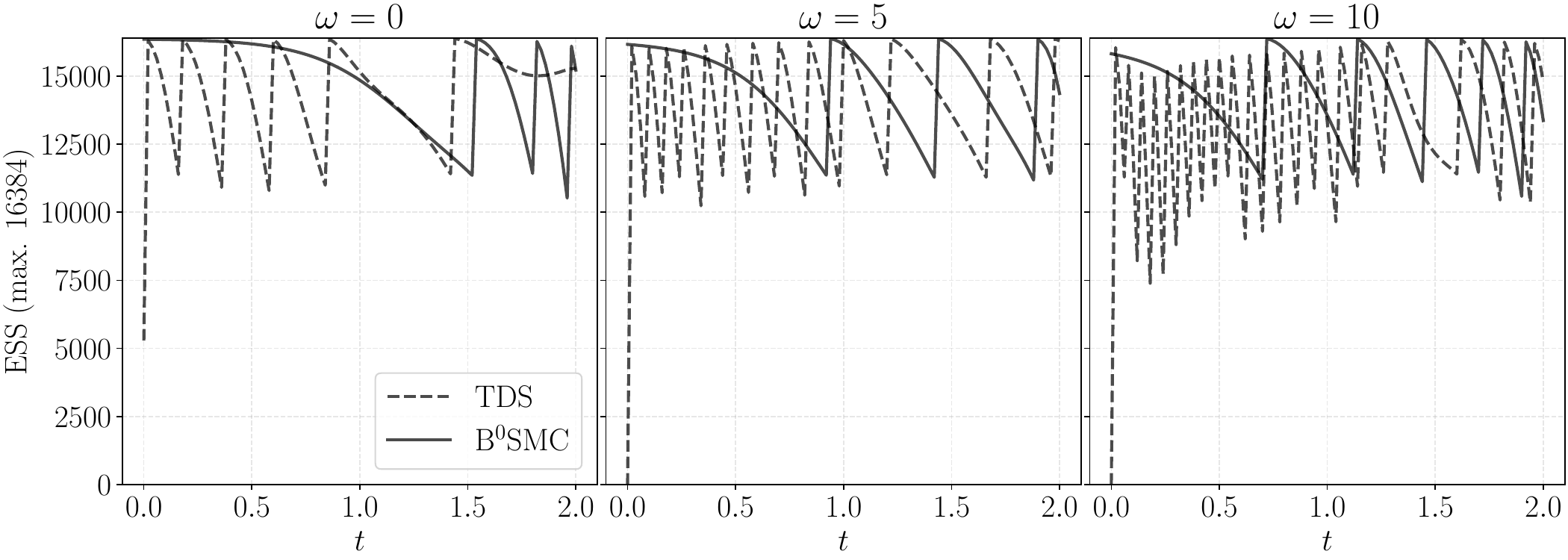}
	\caption{Effective sample sizes (ESS) of TDS and B$^0$SMC at one run. We let the SMC samplers to trigger resampling if the ESS goes below 70\%. We see that B$^0$SMC triggers substantially less resampling. }
	\label{fig:ess}
\end{figure}

\begin{figure}[t!]
	\centering
	\includegraphics[width=.49\linewidth]{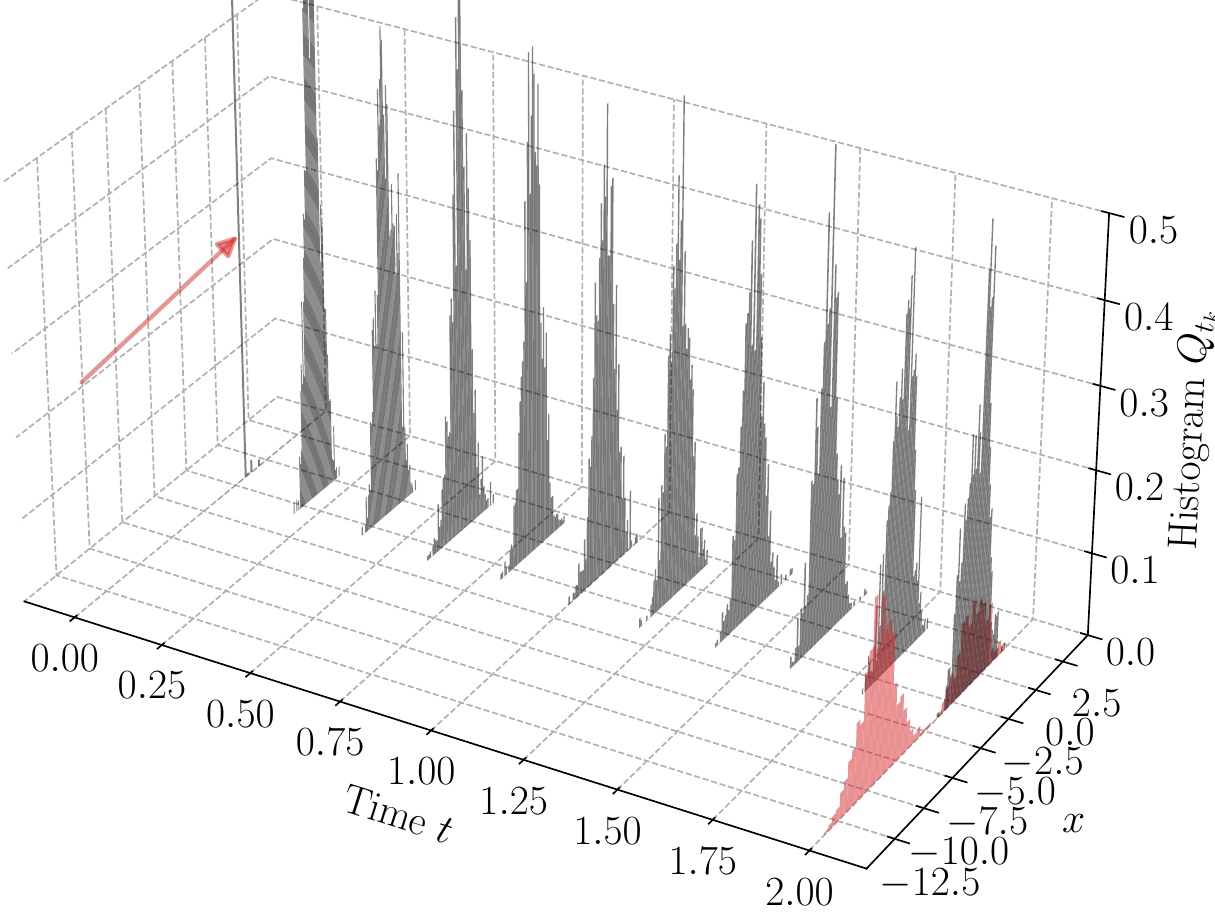}
	\includegraphics[width=.49\linewidth]{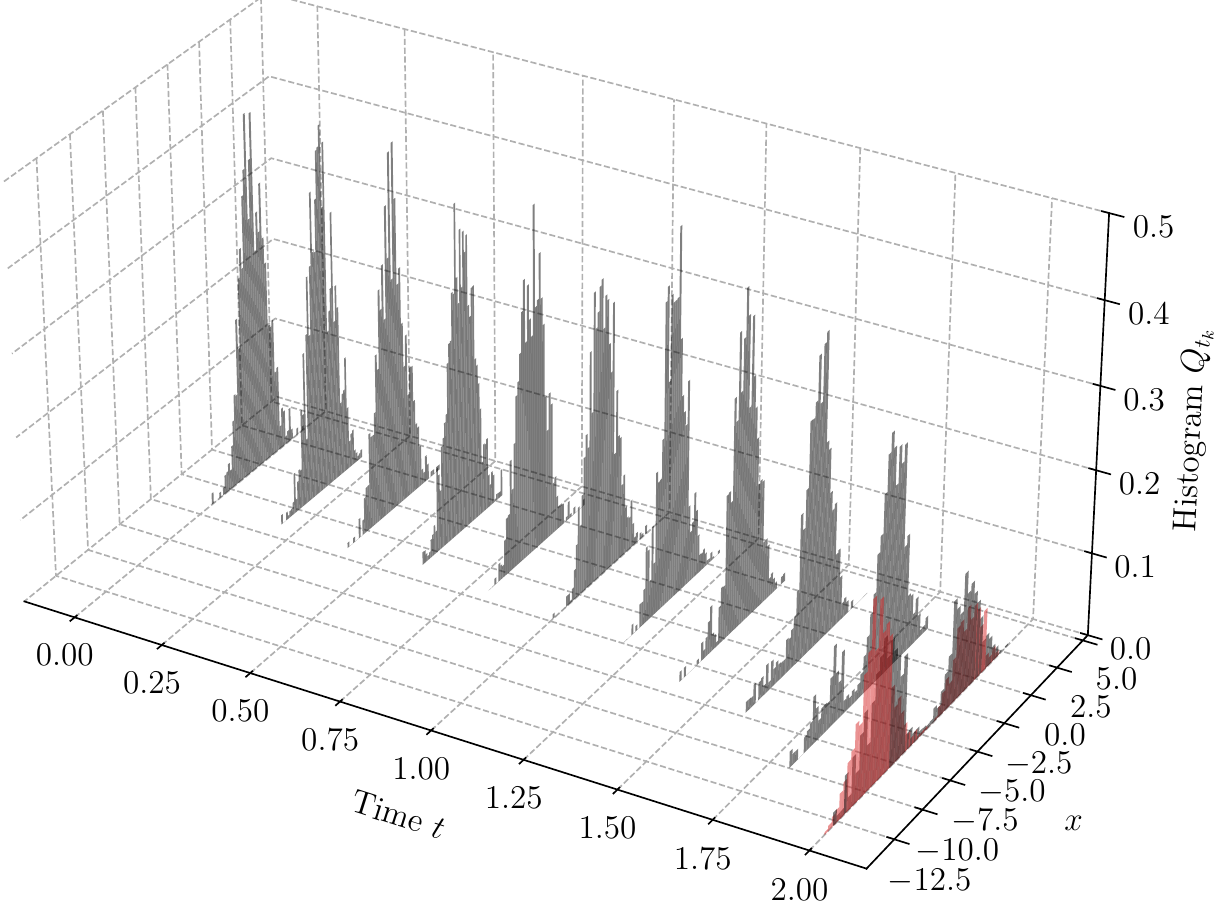}
	\caption{Marginal distribution $Q_k$ of TDS (left) and B$^0$SMC (right) with outlier level $\omega=20$. 
	The histogram in red represents the true posterior distribution. 
	We see that B$^0$SMC correctly recovers the target at $T=2$ whereas TDS does not. Importantly, at $t=0$ pointed by the red arrow, TDS is highly degenerate due to its twisting approximation.}
	\label{fig:marginals}
\end{figure}

For more in-depth comparison, we also plot the ESS and time-evolution of marginal $Q_k$ in Figures~\ref{fig:ess} and~\ref{fig:marginals}. 
The first figure gives us three observations. 
First, we find that in average B$^0$SMC has higher ESS than TDS consistently at different outlier levels. 
Second, B$^0$SMC triggers substantially less resampling, implying that the proposal in Equation~\eqref{equ:cosmc-proposal} is more efficient. 
Third, the initial ESS at $t=0$ of TDS is significantly low. 
This is due to TDS' twisting approximation in Equation~\eqref{equ:twisting-canonical} which is highly erroneous when $T - t$ is large. 
This problem is also reflected in Figure~\ref{fig:marginals}.
We see that TDS at $t=0$ has degenerate samples, and in time, the sampler essentially is moving toward a wrong path, and does not recover the true distribution at $T$. 
On the other hand, the marginal sequence of B$^0$SMC smoothly moves from the reference distribution at $t=0$ and correctly hits the target distribution at $T$.

\section{Conclusion}
\label{sec:conclusion}
In this paper we have developed a new approach for (training-free) conditional sampling of generative diffusion models. 
The method takes any pre-trained diffusion model as prior, and is able to sample the posterior distribution given any pointwise evaluable likelihood function. 
The overall framework is based on twisted Feynman--Kac models and sequential Monte Carlo samplers, where a crucial part of it lies in the construction of the twisting function.
As such, the community, fairly recently, has been working on developing efficient twisting constructions, see, e.g., seminal work by~\citet{Cardoso2024monte, Janati2024DC} and their derivatives. 
Here, we take one step further, and we have constructed a new twisting sequence that provides better statistical performance when the likelihood part poses challenges for posterior sampling. 
Our empirical results show that the proposed method outperforms the peer methods in terms of less posterior sample approximations and higher effective sample size, particularly, when the observation is an outlier. 


\section*{Acknowledgement}
This work was partially supported by 1) the Wallenberg AI, Autonomous Systems and Software Program (WASP) funded by the Knut and Alice Wallenberg Foundation, and 2) the Stiftelsen G.S Magnusons fond (MG2024-0035). 
I thank the Division of Systems and Control at Uppsala University for providing computational resources, and Fredrik Lindsten for discussing this paper. 

I also acknowledge that as a paper on generative diffusion models, real experiments to broaden the impact, such as generating cat images, seem to be missing. 
Verily, I have no excuse for this omission, and defer this to a working future paper.

Lastly but not least, this work is committed to celebrating the 90th birthday of Thomas Kailath for his contributions in control and signal processing.

\bibliography{refs}
\bibliographystyle{apalike}

\appendix
\section{Experiment details}
\label{appendix:details}
The coefficients in Equation~\eqref{equ:gm} are generated as follows.
The mixture weights $\lbrace \gamma_i \rbrace_{i=1}^{10}$ are first independently sampled from a Chi-squared (with degree 1) distribution and are then normalised $\sum_{i=1}^{10}\gamma_i = 1$. 
Also independently for each $i$, the mixture mean $m_i \sim \mathrm{Unif}[-8, 8]^d$ follows a uniform distribution, and the mixture covariance $\Lambda_i = \lambda_i \, \lambda_i^\trans + I_d$, where $\lambda_i\sim \mathrm{Unif}[0, 1]^d$ is also a uniform random variable. 
Let $\widehat{H}\sim \mathrm{N}(0, I_{c\times d})$, $\alpha \sim \mathrm{Unif}[0, 1]^c$, and $\beta \sim \mathrm{Unif}[0, 1]^c$. 
Then we set
\begin{equation*}
	H = U \diagbig{\mathrm{sort}(\alpha) + 10^{-3}} \, V^\trans, 
\end{equation*}
where $U \, S \, V^\trans = \widehat{H}$ is the singular value decomposition of $\widehat{H}$. 
Finally, the observation covariance $R = \beta \, \beta^\trans + \max(\alpha)^2\, I_{c\times c}$. 
When comparing to MCGDiff, the covariance is set to $R=10^{-8}$ for B$^0$SMC.

Recall the test diffusion model in Equation~\eqref{equ:test-diffusion}:
\begin{equation*}
	\begin{split}
		\diff X_t &= a \, X_t \diff t + b\diff W_t, \quad X_0 \sim \pi, \\
		\diff U_t &= a \, U_t + b^2\grad\log p_t(U_t, T - t) \diff t + b \diff B_t, \quad U_0 \sim p_T,
	\end{split}
\end{equation*}
where we chose $a = -1$ and $b = \sqrt{2}$. 
We choose the auxiliary observation process to be the same as this noising process, and as a result, $A_k = \exp(a \, (t_k - t_{k-1}))$ and $\Sigma_k = \frac{b^2}{2 \, a}\bigl(\exp(2 \, a \, (t_k - t_{k-1})) - 1\bigr)$. 
With the Gaussian mixture $\pi$, the reference distribution $p_T = \sum_{i=1}^{10} \gamma^T_i \, \mathrm{N}(m_i^T, \Lambda_i^T)$, where $m_i^T = \exp(a \, T) \, m_i$, and $\Lambda_i^T = E_i \diag{\theta_i^T} \, E_i^\trans$, where $(E_i, \theta_i)$ is the eigendecomposition of $\Lambda_i$ and $\theta_i^T = \exp(2 \, a \, T) \, \theta_i + \frac{b^2}{2 \, a}\bigl(\exp(2 \, a \, T) - 1\bigr)$. 
Using the same routine by replacing $T$ with $t$, the Gaussian mixture marginal $p_t$ is available in closed form, so does its score.
We then use an Euler--Maruyama discretisation at 100 steps $0=t_0 < t_1 < \cdots < t_N = T$ to simulate the denoising process above. 

For all SMC samplers we compare here, we use stratified resampling when the ESS becomes lower than 70\%. 
The number of posterior samples is equal to the number of particles, unlike~\citet{Cardoso2024monte, Kelvinius2025DDSMC} who formed a SMC chain of $J$ particles to generate $M > J$ samples. 
If wish to do so, the SMC sampler should be replaced with a conditional SMC sampler~\citep{Corenflos2024FBS}, otherwise biased.

\begin{figure}[t!]
	\centering
	\includegraphics[width=.95\linewidth]{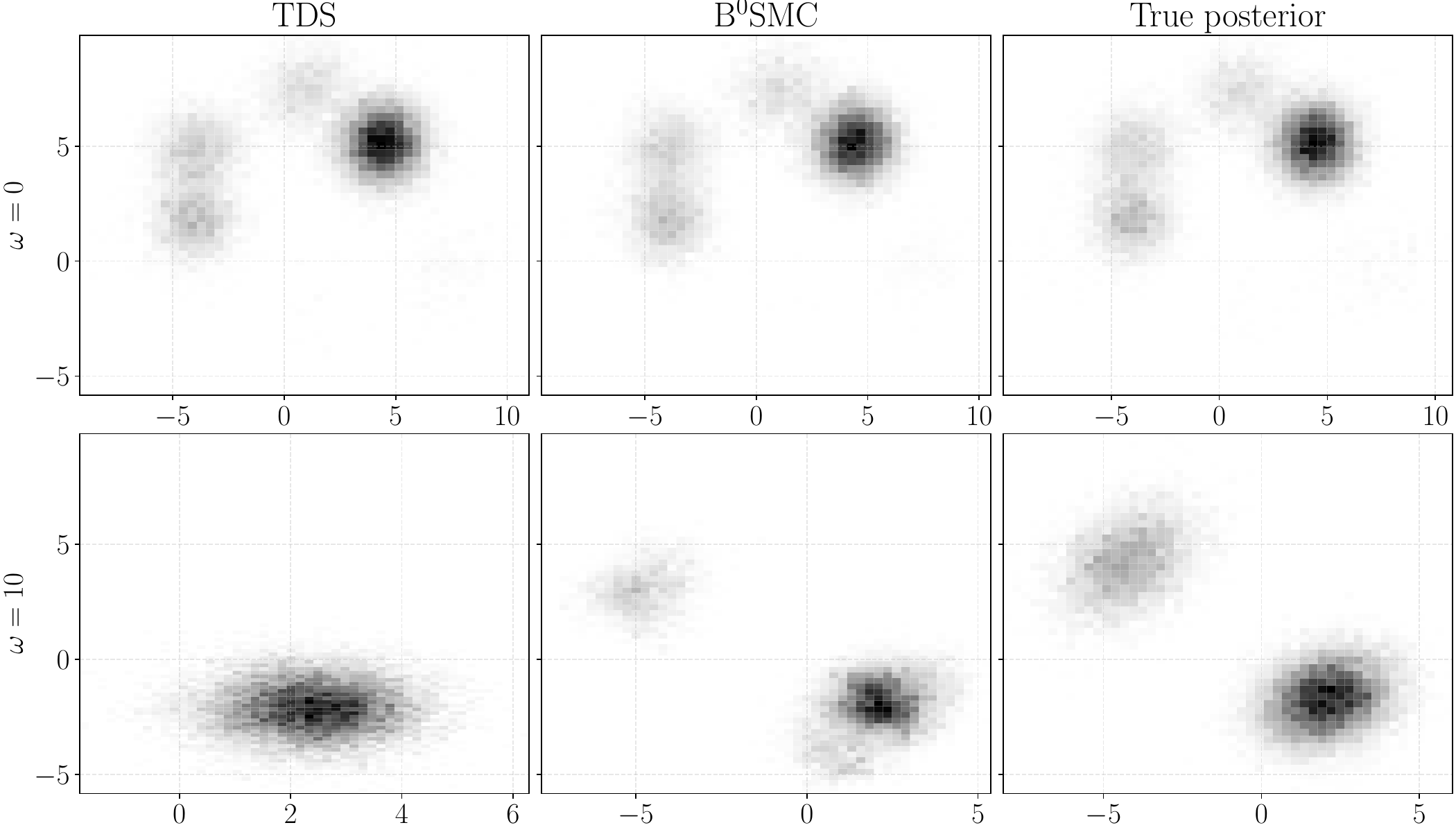}
	\caption{Histogram (2D slice) of the posterior samples drawn by TDS and B$^0$SMC at one run. We see that B$^0$SMC captures the true posterior distribution better than TDS when $\omega$ is high.}
\end{figure}

\end{document}